\newcommand{\la}{\langle}
\newcommand{\ra}{\rangle}
\newcommand{\mX}{\mathcal{X}}
\newcommand{\bR}{\mathbf{R}}
\newcommand{\grad}{\nabla}
\newcommand{\Exp}{\mathbf{E}}
\newcommand{\mM}{\ensuremath{\mathcal{M}}}
\newcommand{\hmu}{\ensuremath{\hat{\mu}}}
\newcommand{\mP}{\ensuremath{\mathcal{P}}}
\newcommand{\mF}{\mathcal{F}}
\newcommand{\mD}{\mathcal{D}}
\newcommand{\htheta}{\hat{\theta}}
\newcommand{\ttheta}{\tilde{\theta}}
\newtheorem{thm}{Theorem}[section]
\newtheorem{prop}[thm]{Proposition}
\newcommand{\rmk}{\bigbreak\noindent{\bf Remark. }}
\newcommand{\unrmk}{\bigbreak}
\newcommand{\unproof}{\hfill$\Box$\bigbreak}
\newcommand{\defeq}{\stackrel{\mathrm{def}}{=}}
\newcommand{\ie} {{\it i.e. }}
\newcommand{\E}{\mathbf{E}}
\newcommand{\betavec}{\pmb{\beta}}
\newcommand{\lpf}{\mathbf{\psi}} %
\newcommand{\family}{\mathcal F} 
\newcommand{\suff}{\pmb{\phi}} %
\newcommand{\muvec}{\pmb{\mu}}
\newcommand{\st}{\hspace{10pt}\mathrm{s.t.}\hspace{10pt}}
\newcommand{\norm}[1]{|| #1 ||}
\newcommand{\tp}{\tilde{p}}
\newcommand{\CE}{C\!E}
\newcommand{\Comments}{1}
\newcommand{\mynote}[2]{\ifnum\Comments=1\textcolor{#1}{#2}\fi}
\begin{document}
\markboth{Abernethy et al.}{Information Aggregation in Exponential Family Markets}
\title{Information Aggregation in Exponential Family Markets}
\author{JACOB ABERNETHY
\affil{University of Michigan, Ann Arbor}
SINDHU KUTTY
\affil{University of Michigan, Ann Arbor}
S\'{E}BASTIEN LAHAIE
\affil{Microsoft Research, New York City}
RAHUL SAMI
\affil{Google India}}

\begin{abstract}
We consider the design of prediction market mechanisms known as automated market makers. We show that we can design these mechanisms via the mold of \emph{exponential family distributions}, a popular and well-studied probability distribution template used in statistics. We give a full development of this relationship and explore a range of benefits. We draw connections between the information aggregation of market prices and the belief aggregation of learning agents that rely on exponential family distributions. We develop a very natural analysis of the market behavior as well as the price equilibrium under the assumption that the traders exhibit risk aversion according to exponential utility. We also consider similar aspects under alternative models, such as when traders are budget constrained.
\end{abstract}

\category{J.4}{Social and Behavioral Sciences}{Economics}
\category{I.2.6} {Artificial Intelligence}{Learning}

\terms{Algorithms, Economics}

\keywords{logarithmic score, exponential family, maximum entropy, risk aversion, budget constraints}

\begin{bottomstuff}
\end{bottomstuff}

\maketitle

\section{Introduction}

Prediction markets are aggregation mechanisms that allow market prices to be interpreted as predictive probabilities on an event. Each trader in the market is assumed to have some private information that he uses to make a prediction on the outcome of the event. Traders are allowed to report their beliefs by buying and selling securities whose ultimate payoff depends on the future outcome. This will affect the state of the market, thus updating the predictive probabilities for the event. Further, since the trades are done sequentially, the trader is allowed to observe all past trades in the market and update his private information based on this information. In this sense the market prices, which are in effect the prices at which the marginal trader is willing to buy or sell the available securities, can be interpreted as an aggregate ``consensus probability forecast'' of the event in question.
Much of the work on prediction market design has focused heavily on structural properties of the mechanism: incentive compatibility, the market maker loss, the available liquidity, the fluctuations of the prices as a function of the trading volume, to name a few. Absent from much of the literature is a corresponding \emph{semantics} of the market behavior or the observed prices. That is, how can we interpret the equilibrium market state when we have a number of traders with diverse beliefs on the underlying state of the world? In what sense is the market an aggregation mechanism? Do price changes relate to our usual Bayesian notion of information incorporation via posterior updating?

In the present work we show that a number of classical statistical tools can be leveraged to design a prediction market framework in the mold of \emph{exponential family distributions}; we show that this statistical framework leads to a number of attractive properties and interpretations. Common concepts in statistics---including \emph{entropy maximization}, \emph{log loss}, and \emph{bayesian inference}---relate to natural aspects of our class of mechanisms. In particular, the central objects in our market framework can be interpreted via concepts used to define exponential families:
\begin{itemize}
	\item the market's \emph{payoff function} corresponds to the \emph{sufficient statistics} of the distribution;
	\item the vector of \emph{outstanding shares} in the market corresponds to the \emph{natural parameter vector} of the distribution;
	\item the market prices correspond to \emph{mean parameters};
	\item the market's cost function corresponds to the distribution's \emph{log-partition} function.
\end{itemize}

We begin in Section~\ref{sec:scoring} with a discussion of scoring rules based on exponential family distributions, and we show how the framework leads to a variety of scoring rules for continuous outcome spaces.
We turn our attention to market design in Section~\ref{sec:maxentmarkets} and give a full description of our proposed mechanisms. In addition to showing the syntactic relationship between exponential families and prediction markets, we explore a number of rich semantic implications as well. In particular, we show that our formulation allows us to analyze the evolution of the market under various models of trader behavior:
\begin{itemize}
	\item Trader behavior varies depending on how they assimilate information; for example, should we consider our agents as Bayesians or frequentists. In Section~\ref{sec:bayesians} we consider traders that use a conjugate prior to update their beliefs, and we study how their trades would affect the market state.
	\item In Section~\ref{sec:exputil} we consider \emph{risk-averse} agents that optimize their bets according to exponential utility. 
	In this case we can characterize precisely how a single trader interacts with the market, as well as the equilibrium reached given multiple traders; this result is achieved via a potential game argument. The eventual market state is a weighted combination of traders' beliefs and the initial state; the weights are proportional to risk aversion parameters.
	\item In Section~\ref{sec:budgets} we consider \emph{budget-limited traders} who are constrained in how much they influence the market. We analyze the market  under these circumstances; we are able to show that traders with good information can expect to profit and their influence over the market state increases over time whereas malicious traders have limited impact on the market.
\end{itemize}

\textbf{Related Work.}
The notion of an exponential family distribution is fundamental to this paper. For comprehensive introductions to these distributions, see~\citep{Barndorff78,WainJordan08}. Exponential families are intimately tied to the notions of log loss and entropy, but can be generalized to other types of convex losses and information, as shown by~\citet{GrunwaldDawid}, who also make a connection to scoring rules.

Scoring rules are a measure of prediction accuracy, and we are concerned here with scoring rules for statistic expectations, typically over infinite outcome spaces. Such rules have been characterized by~\citet{Savage71}; see also~\citep{frongillo2013elicitingmeans,lambert2008eliciting}. Our rules are of course special cases of this characterization, but it appears the range of elegant scoring rules that arise from exponential families has not been appreciated. Indeed,~\citet{gneiting2007strictly} observe that specific instances of scoring rules for continuous outcomes are lacking, and survey various possibilities.

In a seminal paper,~\citet{H03} showed how to form a prediction market based on a sequentially-shared scoring rule, and specifically proposed the logarithmic market scoring rule (LMSR) based on log loss for finite outcome spaces~\citep{H07}. The markets we introduce are direct generalizations of the LMSR to continuous outcomes, but take the form of cost-function based markets as introduced by~\citet{chen2007utility}.~\citet{Gao09} and~\citet{measurable13} also consider extending various market makers to infinite outcome spaces.

Prediction markets are known to perform well in practice~\citep{AGT07,pennock2001real}. However, a sound theory for interpreting trader behavior and market prices is an ongoing area of study~\citep{wolfers2006interpreting}. At one extreme, agents are assumed myopic and risk-neutral, implying they move the market state to their belief~\citep{chen2010new}. At the other extreme, agents are strategic and the market fully incorporates all information~\citep{ostrovsky2012}. 

We are not aware of any works that consider risk-averse agents within cost-function based markets. However, risk aversion is a fundamental component of mathematical finance and portfolio optimization, and there are close connections between the notion of a cost function and that of a convex risk measure~\citep{follmerRisk,follmer2011entropic}. Indeed, they arise from the same axioms as noted by~\citet{othman2011liqu}. We see the potential to draw more on the mathematical finance literature to take into account risk aversion, as prediction markets are simply single-period financial markets~\citep[Part I]{follmerFinance}. We also note that connections between Machine Learning and market mechanisms have been explored in \cite{storkey11}.

\section{Generalized Log Scoring Rules}
\label{sec:scoring}

We consider a measurable space consisting of a set of outcomes $\mX$ together with a $\sigma$-algebra $\mF$. An agent or expert has a \emph{belief} over potential outcomes taking the form of a probability measure absolutely continuous with respect to a base measure~$\nu$.\footnote{Recall that a measure $P$ is absolutely continuous with respect to~$\nu$ if $P(A) = 0$ for every $A \in \mF$ for which $\nu(A) = 0$. In essence the base measure~$\nu$ restricts the support of $P$. In our examples $\nu$ will typically be a restriction of the Lebesgue measure for continuous outcomes or the counting measure for discrete outcomes.} Throughout we represent the belief as the corresponding density $p$ with respect to~$\nu$. Let $\mP$ denote the set of all such probability densities.

We are interested in eliciting information about the agent's belief, in particular expectation information. Let $\phi: \mX \rightarrow \bR^d$ be a vector-valued random variable or \emph{statistic}, where $d$ is finite. The aim is to elicit $\mu = \E_{p}[\phi(x)]$ where $x$ is the random outcome. A \emph{scoring rule} is a device for this purpose. Let $$\mM = \left\{ \mu \in \bR^d : \Exp_p[\phi(x)] = \mu,\, \mbox{for some $p \in \mP$} \right\}$$ be the set of realizable statistic expectations.  A scoring rule $S : \mM \times \mX \rightarrow \bR \cup \{-\infty\}$ pays the agent $S(\hmu,x)$ according to how well its report $\hmu \in \mM$ agrees with the eventual outcome $x \in \mX$. The following definition is due to~\citet{lambert2008eliciting}.
\begin{definition} \label{def:proper-score}
A scoring rule $S$ is  \emph{proper} for statistic $\phi$ if for each $\mu \in \mM$ and $p \in \mP$ with expected statistic $\mu$, we have for all $\hmu \neq \mu$
\begin{equation} \label{eq-proper}
\Exp_p[S(\mu,x)] \geq \Exp_p[S(\hmu,x)].
\end{equation}
\end{definition}

\noindent
Given a proper scoring rule $S$ any affine transformation $\tilde{S}(\mu,x) = aS(\mu,x) + b(x)$ of the rule, with $a > 0$ and $b$ an arbitrary real-valued function of the outcomes, again yields a proper scoring rule termed \emph{equivalent}~\citep{Dawid98,gneiting2007strictly}. Throughout we will implicitly apply such affine transformations to obtain the clearest version of the scoring rule. We will also focus on scoring rules where inequality~(\ref{eq-proper}) is strict to avoid trivial cases such as constant scoring rules.

 Classically, scoring rules take in the entire density $p$ rather than just some statistic, and incentive compatibility must hold over all of $\mP$. When the outcome space is large or infinite, it is not feasible to directly communicate $p$, so the definition allows for summary information of the belief.

Note that Definition~\ref{def:proper-score} places only mild information requirements on the part of the agent to ensure truthful reporting. Because condition~(\ref{eq-proper}) holds for all $p$ consistent with expectation $\mu$, it is enough for the agent to simply know the latter and not the complete density to be properly incentivized. However, the agent must also agree with the support of the density as implicitly defined by base measure $\nu$. 

When the outcome space is finite we recover classical scoring rules by using the statistic $\phi : \mX \rightarrow \{0,1\}^{\mX}$ that maps an outcome $x$ to a unit vector with a 1 in the component corresponding to $x$. The expectation of $\phi$ is then exactly the probability mass function.

\subsection{Proper Scoring from Maximum Entropy}

Our starting point for designing proper scoring rules is the classic logarithmic scoring rule for eliciting probabilities in the case of finite outcomes. This rule is simply $S(p,x) = \log p(x)$, namely we take the log likelihood of the reported density at the eventual outcome. To generalize the rule to expected statistics rather than full densities, we consider a subset of densities $\mD \subseteq \mP$. If there is a bijection between the sets $\mD$ and $\mM$, then we say that $\mM$ parametrizes $\mD$ and write $p(\cdot\,;\mu)$ for the density mapping to $\mu$. Given such a family parametrized by the relevant statistics, the generalized log scoring rule is then
\begin{equation} \label{log-score}
S(\mu, x) = \log p(x;\mu).
\end{equation}
Even though the log score is only applied to densities from $\mD$, according to Definition~\ref{def:proper-score} it must work over all densities in $\mP$. It turns out this is possible if $\mD$ is chosen appropriately, drawing on a well-known duality between maximum likelihood and maximum entropy~\citep{GrunwaldDawid}.

\subsubsection*{Exponential Families}

We let $p(x;\mu)$ be the maximum entropy distribution with expected statistic $\mu$. Specifically, it is the solution to the following program:\footnote{We assume that the minimum is finite and achieved for all $\mu \in \mM$. Some care is needed to ensure this holds for specific statistics and outcome spaces. For example, taking outcomes to be the real numbers, there is no maximum entropy distribution with a given mean $\mu$ (one can take densities tending towards the uniform distribution over the reals), but there is always a solution if we constrain both the mean and variance.}
\begin{equation} \label{maxent-prog}
 \min_{p \in \mP} \:F(p) \st \Exp_{p}[\phi(x)] = \mu,
\end{equation}
where the objective function is the negative entropy of the distribution, namely
\[ F(p) = \int_{x \in \mX} p(x) \log p(x)\, d\nu(x).
\]
Note that the explicit set of constraints in~(\ref{maxent-prog}) are linear, whereas the objective is convex. We let $G : \mM \rightarrow \bR$ be the optimal value function of~(\ref{maxent-prog}), meaning $G(\mu)$ is the negative entropy of the maximum entropy distribution with expected statistics $\mu$. 

It is well-known that solutions to~(\ref{maxent-prog}) are \emph{exponential family} distributions, whose densities with respect to $\nu$ take the form
\begin{equation} \label{exp-fam}
p(x;\theta) = \exp( \la \theta, \phi(x) \ra - T(\theta) ).
\end{equation}
The density is stated here in terms of its \emph{natural} parametrization $\theta \in \bR^d$, where $\theta$ arises as the Lagrange multiplier associated with the linear constraints in~(\ref{maxent-prog}). The term $T(\theta)$ essentially arises as the multiplier for the normalization constraint (the density must integrate to 1), and so ensures that~(\ref{exp-fam}) is normalized:
\begin{equation} \label{log-part}
T(\theta) = \log \int_{\mX} \exp \la \theta, \phi(x) \ra \,d\nu(x).
\end{equation}
The function $T$ is known as the \emph{log-partition} or \emph{cumulant} function corresponding to the exponential family. Its domain is $\Theta = \{ \theta \in \bR^d : T(\theta) < +\infty \}$, called the natural parameter space. The exponential family is \emph{regular} if $\Theta$ is open---almost all exponential families of interest, and all those we consider in this work, are regular. The family is \emph{minimal} if there is no $\alpha \in \Theta$ such that $\la \alpha, \phi(x) \ra$ is a constant over $\mX$ ($\nu$-almost everywhere); minimality is a property of the associated statistic $\phi$, usually called the \emph{sufficient statistic} in the literature. 

The following proposition collects the relevant results on regular exponential families; proofs may be found in~\citet[Prop.\ 3.1--3.2, Thm.\ 3.3--3.4]{WainJordan08} and see also~\citet[Lem.\ 1, Thm.\ 2]{BanerjeeDhGh05b}. A convex function $T$ is of \emph{Legendre type} if it is proper, closed, strictly convex and differentiable on the interior of its domain, and $\lim_{\theta \rightarrow \bar{\theta}} \norm{\grad T(\theta)} = +\infty$ when $\bar{\theta}$ lies on the boundary of the domain. 
\begin{prop}\label{prop:exp}
Consider a regular exponential family with minimal sufficient statistic. The following properties hold:
\begin{enumerate}
\item $T$ and $G$ are of Legendre type, and $T = G^*$ (equivalently $G = T^*$). 
\item The gradient map $\grad T$ is one-to-one and onto the interior of $\mM$. Its inverse is $\grad G$ which is one-to-one and onto the interior of $\Theta$.
\item The exponential family distribution with natural parameter $\theta \in \Theta$ has expected statistic $\mu = \Exp_p[\phi(x)] = \grad T(\theta)$.
\item The maximum entropy distribution with expected statistic $\mu$ is the exponential family distribution with natural parameter $\theta = \grad G(\mu)$. 
\end{enumerate}
\end{prop}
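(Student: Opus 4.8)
The plan is to prove the four claims roughly in the order 3, 1, 2, 4: item~3 is a direct differentiation, item~1 is convex duality applied to the maximum-entropy program~(\ref{maxent-prog}), and items~2 and~4 then follow from the Legendre-type structure. First I would prove item~3 by differentiating the integral in~(\ref{log-part}) under the integral sign. Since $\Theta$ is open, for each $\theta \in \Theta$ the function $\xi \mapsto \int_{\mX} \exp\la \xi, \phi(x)\ra\, d\nu(x)$ is finite on a neighborhood of $\theta$, which supplies the domination needed (via dominated convergence) to conclude that $T$ is in fact $C^\infty$ on $\Theta$, with $\grad T(\theta) = \int_\mX \phi(x)\, p(x;\theta)\, d\nu(x) = \E_{p(\cdot;\theta)}[\phi(x)]$ and $\grad^2 T(\theta) = \mathrm{Cov}_{p(\cdot;\theta)}[\phi(x)]$.

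The covariance matrix is positive semidefinite, and positive definite precisely because the family is minimal: a vector $\alpha$ in its kernel would force $\la \alpha, \phi(x)\ra$ to be $\nu$-a.e.\ constant. Hence $T$ is smooth and strictly convex on $\Theta$. It is proper (finite on the nonempty set $\Theta$, never $-\infty$) and closed (lower semicontinuity follows from Fatou's lemma applied to~(\ref{log-part})). The last ingredient for ``Legendre type'' is steepness, $\norm{\grad T(\theta)} \to +\infty$ as $\theta \to \partial\Theta$; this is the classical fact that regular exponential families are steep, and I would cite it rather than reprove it. I expect this to be the main obstacle to a self-contained treatment---unlike the rest, it is not a short computation and genuinely uses the structure of the log-partition function.

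For item~1 I would set up Lagrangian duality for~(\ref{maxent-prog}), with a multiplier $\theta \in \bR^d$ for the moment constraints and a scalar multiplier for the normalization constraint $\int p\, d\nu = 1$. Minimizing the Lagrangian pointwise over $p$ produces the Gibbs form $p(x;\theta) = \exp(\la \theta, \phi(x)\ra - T(\theta))$ and a dual objective $\la \theta, \mu\ra - T(\theta)$; strong duality holds since the objective is convex, the constraints affine, and (by the standing assumption that the optimum is finite and attained) the relevant constraint qualification is met, so $G(\mu) = \sup_{\theta}\{ \la \theta, \mu\ra - T(\theta)\} = T^*(\mu)$. Because $T$ is closed proper convex, $T = T^{**} = G^*$. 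Finally, the conjugate of a Legendre-type function is again of Legendre type, so $G = T^*$ inherits this property; one checks in passing that $\mathrm{dom}\,G$ has the same interior as $\mM$.

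Items~2 and~4 are then immediate. For item~2, the conjugacy theorem for Legendre-type functions gives that $\grad T$ is a bijection from $\mathrm{int}\,\Theta = \Theta$ onto $\mathrm{int}(\mathrm{dom}\,G) = \mathrm{int}\,\mM$ with inverse $\grad G$; combined with item~3 this identifies the bijection with the mean-parameter map. For item~4, the primal optimum of~(\ref{maxent-prog}) extracted from the dual solution is exactly the Gibbs density $p(\cdot;\theta)$ at the optimal multiplier, which by item~2 is the unique $\theta$ with $\grad T(\theta) = \mu$, namely $\theta = \grad G(\mu)$; item~3 confirms this density has expected statistic $\mu$, closing the loop. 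Besides steepness, the only delicate points are the technical hypotheses for differentiating under the integral and for strong duality---both subsumed by the footnoted assumption that~(\ref{maxent-prog}) has a finite, attained optimum for every $\mu \in \mM$.
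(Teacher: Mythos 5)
The paper itself contains no proof of Proposition~\ref{prop:exp}: it defers entirely to the cited sources (Wainwright and Jordan, Prop.~3.1--3.2 and Thm.~3.3--3.4, and Banerjee et al.), so the only fair comparison is against the standard argument in those references --- and your sketch reconstructs essentially that argument: differentiation under the integral identifying $\grad T$ and $\grad^2 T$ with the mean and covariance of $\phi$, minimality giving positive-definiteness and hence strict convexity, conjugate/Legendre duality linking $T$ and $G$, and recovery of the maximum-entropy solution in Gibbs form. Two points should be made explicit rather than left as asides. First, steepness of $T$ at $\partial\Theta$ is, as you note, the one genuinely nontrivial analytic ingredient (regular implies steep is Barndorff-Nielsen's theorem), and it is precisely what the Legendre correspondence requires; citing it is legitimate, since the paper cites the entire proposition anyway. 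Second, the identification $\mathrm{int}(\mathrm{dom}\,T^{*}) = \mathrm{int}\,\mM$, which you dispatch as something ``one checks in passing,'' is where the ``onto the interior of $\mM$'' claim of item~2 actually lives, so record the short argument: $\grad T(\Theta) \subseteq \mM$ trivially, while for any density $p$ with mean $\mu$, Gibbs' inequality gives $\la \theta, \mu \ra - T(\theta) = \Exp_p[\log p(x;\theta)] \le \Exp_p[\log p(x)]$, hence $T^{*}(\mu) \le G(\mu) < +\infty$ and $\mM \subseteq \mathrm{dom}\,T^{*}$; combined with $\grad T(\Theta) = \mathrm{int}(\mathrm{dom}\,T^{*})$ from the Legendre correspondence, both inclusions follow. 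The same Gibbs-inequality computation also yields $G(\mu) = T^{*}(\mu)$ on $\mathrm{int}\,\mM$ directly (equality holding at $p = p(\cdot\,;\theta)$ with $\grad T(\theta) = \mu$), which lets you bypass strong duality for the infinite-dimensional program~(\ref{maxent-prog}); the constraint-qualification step you wave at is the only soft spot in your item~1, and this substitution removes it.
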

In the above $T^*$ denotes the convex conjugate of $T$, which here can be evaluated as $T^*(\mu) = \sup_{\theta \in \Theta} \la \theta, \mu \ra - T(\theta)$. Similarly, $G^*(\theta) = \sup_{\mu \in \mM} \la \theta, \mu \ra - G(\mu)$.

\subsubsection*{Proper Log Scoring}

We are now in a position to analyze the log scoring rule under exponential family distributions. From our discussion so far, we have that an exponential family density can be parametrized either by the natural parameter $\theta$, or by the mean parameter $\mu$, and that the two are related by the invertible gradient map $\mu = \grad T(\theta)$. We will write $p(x;\theta)$ or $p(x;\mu)$ given the parametrization used.

The following observation is crucial. Let $\tp \in \mP$ be a density (not necessarily from an exponential family) with expected statistic $\mu$, let $p(\cdot\,;\mu)$ be the exponential family with the same expected statistic, and let $\hmu \in \mM$ be an alternative report. Then from~(\ref{exp-fam}) note
\begin{equation} \label{equalizer-rule}
\Exp_{\tp}[\log p(x;\hmu)] = \Exp_{p(\cdot;\mu)}[\log p(x;\hmu)] = \la \htheta, \mu \ra - T(\htheta),
\end{equation}
where $\htheta = \grad G(\hmu)$ is the natural parameter for the exponential family with statistic $\hmu$. We see from this that the expected log score only depends on the expectation $\mu$ of the underlying density, not the full density, which is how we can achieve proper scoring according to Definition~{\ref{def:proper-score}.
\begin{theorem} \label{maxent-score}
Consider the logarithmic scoring rule $S(\mu,x) = \log p(x;\mu)$ defined over a set of densities $\mD$ parametrized by $\mM$. The scoring rule is proper if and only if $\mD$ is the exponential family with statistic $\phi$. 
\end{theorem}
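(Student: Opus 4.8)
The plan is to prove the two implications of the biconditional separately; the ``only if'' direction does essentially all the work.

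\emph{The ``if'' direction.} Assume $\mD$ is the exponential family with statistic $\phi$. The argument rests entirely on the identity recorded in~(\ref{equalizer-rule}): for any density $\tp$ with expected statistic $\mu$ and any report $\hmu \in \mM$ we have $\Exp_{\tp}[\log p(x;\hmu)] = \la \htheta, \mu \ra - T(\htheta)$ with $\htheta = \grad G(\hmu)$. In particular the expected score depends on $\tp$ only through $\mu$ (the ``equalizer'' phenomenon), so it suffices to maximize $\eta \mapsto \la \eta, \mu\ra - T(\eta)$ over $\eta \in \Theta$. Since the family is regular and minimal, $T$ is of Legendre type (Proposition~\ref{prop:exp}(1)), hence strictly convex, so this map is strictly concave and its unique maximizer is the $\eta$ with $\grad T(\eta) = \mu$, i.e.\ $\eta = \grad G(\mu)$ (Proposition~\ref{prop:exp}(2)--(3)). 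Transporting back through the bijection $\hmu \leftrightarrow \htheta = \grad G(\hmu)$ gives that $\Exp_{\tp}[\log p(x;\hmu)]$ is uniquely maximized at $\hmu = \mu$, which is (strict) properness.

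\emph{The ``only if'' direction.} Assume $S(\mu,x) = \log p(x;\mu)$ is proper. First I would verify that $p(\cdot;\mu)$ genuinely has expected statistic $\mu$. If $\mu' := \Exp_{p(\cdot;\mu)}[\phi]$ differed from $\mu$, then applying Definition~\ref{def:proper-score} with true density $p(\cdot;\mu)$ and report $\hmu = \mu$ would give $\Exp_{p(\cdot;\mu)}[\log p(x;\mu')] \ge \Exp_{p(\cdot;\mu)}[\log p(x;\mu)]$, while Gibbs' inequality gives the reverse, with equality only if $p(\cdot;\mu') = p(\cdot;\mu)$ $\nu$-a.e.; the bijection then forces $\mu' = \mu$, a contradiction. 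Next I would read off the exponential form from first-order optimality. Properness says that for every density $\tp$ with $\Exp_{\tp}[\phi] = \mu$ the map $\hmu \mapsto \Exp_{\tp}[\log p(x;\hmu)]$ is maximized at $\hmu = \mu$, so interchanging differentiation and expectation gives $\Exp_{\tp}\big[\grad_{\hmu} \log p(x;\hmu)|_{\hmu=\mu}\big] = 0$. Because this must hold for \emph{every} $\tp$ satisfying the linear moment constraints $\Exp_{\tp}[\phi] = \mu$ and $\Exp_{\tp}[1] = 1$, the bracketed integrand --- as a linear functional of $\tp$ --- must lie in the span of those constraint functionals, i.e.\ $\grad_{\hmu}\log p(x;\hmu)|_{\hmu=\mu} = A(\mu)\phi(x) + c(\mu)$ for $\nu$-a.e.\ $x$, for some $A(\mu) \in \bR^{d\times d}$ and $c(\mu)\in\bR^d$. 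Integrating this gradient identity along a segment from a fixed base point $\mu_0$ yields $\log p(x;\mu) = \la \eta(\mu), \phi(x)\ra + B(\mu) + h(x)$ with $h = \log p(\cdot;\mu_0)$; the normalization $\int p(x;\mu)\,d\nu = 1$ then determines $B$ in terms of the log-partition function relative to the base measure $e^{h}\,d\nu$, and the first step identifies $\eta$ with the canonical mean-to-natural map. Thus every $p(\cdot;\mu)$ has the exponential-family form~(\ref{exp-fam}) with statistic $\phi$. (Alternatively, one may quote the Savage/Gneiting--Raftery representation of proper scoring rules for statistic expectations~\citep{Savage71,gneiting2007strictly,frongillo2013elicitingmeans}: any proper $S$ can be written $g(\mu) + \la\grad g(\mu), \phi(x) - \mu\ra + b(x)$ with $g$ convex, and substituting $S = \log p(\cdot;\mu)$ and exponentiating exhibits the exponential family directly, with $e^{b(x)}$ absorbed into the base measure.)

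The hard part is the ``only if'' direction, and within it the differentiation-and-span step: I need to justify interchanging $\grad_{\hmu}$ with $\Exp_{\tp}[\cdot]$ (a regularity hypothesis on the family $p(\cdot;\mu)$), restrict attention to $\mu$ in the interior of $\mM$ so that the moment-constrained slice of $\mP$ is full-dimensional, and argue that a linear functional annihilating the tangent space of that slice is a combination of $1, \phi_1, \dots, \phi_d$ --- which requires picking the ambient function space carefully. By contrast, once the gradient identity is in hand, integrating it (the exactness condition is automatic, since the left-hand side is literally a gradient in $\mu$) and matching constants via normalization is routine.
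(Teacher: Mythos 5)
Your proof is correct, and your ``if'' direction is essentially the paper's: the paper computes $\Exp_p[\log p(x;\mu)]-\Exp_p[\log p(x;\hmu)]=T(\htheta)-T(\theta)-\la\htheta-\theta,\grad T(\theta)\ra$ and invokes strict convexity of $T$ (a Bregman divergence), which is the same content as your observation that the equalizer identity~(\ref{equalizer-rule}) reduces the problem to maximizing the strictly concave map $\eta\mapsto\la\eta,\mu\ra-T(\eta)$ at $\eta=\grad G(\mu)$. Where you genuinely diverge is the ``only if'' direction. The paper simply quotes the Savage/Gneiting--Raftery representation of proper scoring rules for statistic expectations, $S(\mu,x)=G(\mu)-\la\grad G(\mu),\mu-\phi(x)\ra$ with $G$ strictly convex, and identifies the exponential form in two lines via $T=G^*$; this is exactly your parenthetical alternative. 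Your primary route instead re-derives that representation from scratch: first-order optimality over the moment-constrained slice of $\mP$, the span argument forcing $\grad_\mu\log p(x;\mu)$ to be affine in $\phi(x)$, and path integration plus normalization. That buys a self-contained argument that exposes where the exponential structure comes from, at the cost of the regularity obligations you correctly flag (differentiation under the expectation, interiority of $\mu$ in $\mM$, and the functional-analytic step that a functional annihilating the constrained tangent space lies in $\mathrm{span}\{1,\phi_1,\dots,\phi_d\}$) --- obligations the paper avoids only by importing them inside the cited characterization theorem. One shared caveat worth noting: both routes really pin down $\log p(x;\mu)=\la\theta,\phi(x)\ra-T(\theta)+b(x)$ up to an outcome-dependent term ($b(x)$ in the Savage form, your $h(x)=\log p(x;\mu_0)$), i.e.\ an exponential family with statistic $\phi$ relative to a possibly tilted base measure; you make this explicit by absorbing $e^{b(x)}$ into the base measure, whereas the paper suppresses it via its standing convention of passing to equivalent scoring rules. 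Your preliminary Gibbs-inequality check that $p(\cdot;\mu)$ has mean $\mu$ is sound but not needed, since it falls out of the final identification $\mu=\grad T(\theta)$.
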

\begin{proof}
Let $\mu,\hmu \in \mM$ be the agent's true belief and an alternative report, and let $p \in \mP$ be a density consistent with $\mu$. Let $\theta = \grad G(\mu)$ and $\htheta = \grad G(\hmu)$, and note that $\mu = \grad T(\theta)$. We have
\begin{eqnarray}
\Exp_p[\log p(x;\mu)] - \Exp_p[\log p(x;\hmu)]
& = & \la \theta, \mu \ra - T(\theta) - \la \htheta, \mu \ra + T(\htheta) \nonumber\\
& = & T(\htheta) - T(\theta) - \la \htheta - \theta, \mu \ra \nonumber\\
& = & T(\htheta) - T(\theta) - \la \htheta - \theta, \grad T(\theta) \ra. \label{breg-div}
\end{eqnarray}
The latter is positive by the strict convexity of $T$, which shows that the log score is proper. For the converse, assume the defined log score is proper. By the Savage characterization of proper scoring rules for expectations (see~\citet[Thm.\ 1]{gneiting2007strictly} and~\citet{Savage71}), we must have
$$
S(\mu,x) = G(\mu) - \la \grad G(\mu), \mu - \phi(x) \ra
$$
for some strictly convex function $G$. Let $T = G^*$, so that $\grad G = {\grad T}^{-1}$, and let $\theta = \grad G(\mu)$. Then the above can be written as
\begin{eqnarray*}
\log p(x;\mu) & = & G(\mu) - \la \grad G(\mu), \mu - \phi(x) \ra \\
& = & \la \theta, \mu \ra - T(\theta) - \la \theta, \mu - \phi(x) \ra = \la \theta, \phi(x) \ra - T(\theta),
\end{eqnarray*}
which shows that $p(x;\mu)$ takes the form of an exponential family.
\end{proof}

As further intuition for the result, note that~(\ref{breg-div}) is the definition of the `Bregman divergence' with respect to strictly convex function $T$, written $D_T$. Therefore we have 
\begin{equation*}
\Exp_p[\log p(x;\mu)] - \Exp_p[\log p(x;\hmu)] = D_T(\htheta,\theta) = D_G(\mu, \hmu),
\end{equation*}
where the last equality is a well-known identity relating the Bregman divergences of $T$ and $T^* = G$. The equation states that the agent's regret from misreporting its mean parameter does not depend on the full density $p$, only the mean $\mu$.   

\subsection{Examples: Moments over the Real Line}

Theorem~\ref{maxent-score} leads to a straightforward procedure for constructing score rules for expectations. Define the relevant statistic, and consider the maximum entropy (equivalently, exponential family) distribution consistent with the agent's reported mean $\mu$. The scoring rule compensates the agent according to the log likelihood of the eventual outcome according to this distribution. The interpretation is that the agent is only providing partial information about the underlying density, so the principal first infers a full density according to the principle of maximum entropy, and then scores the agent using the usual log score.
  
An advantage of this generalization of the log score is that, for many domains (multi-dimensional included) and expectations of interest, it leads to novel closed-form scoring rules. By examining the log densities of various exponential families, we can for instance obtain scoring rules for several different combinations of the arithmetic, geometric, and harmonic means, as well as higher order moments. The following examples illustrate the construction.
\begin{example} \label{ex:exponential}
As base measure we take the Lebesgue restricted to $[0,+\infty)$, and we consider the statistic $\phi(x) = x$ so that we are simply eliciting the mean. The maximum entropy distribution with a given mean $\mu$ is the exponential distribution, and taking its log density gives the scoring rule
\begin{equation} \label{exp-scoring}
S(\mu, x) = -\frac{x}{\mu} - \log \mu.
\end{equation}
We stress that although this rule is derived from the exponential distribution, Theorem~\ref{maxent-score} implies that it elicits the mean of any distribution supported on the non-negative reals (e.g., Pareto, lognormal). Indeed, it is easy to see that the expected score~(\ref{exp-scoring}) depends only on the mean of the agent's belief because it is linear in $x$. 
As a generalization of this example, the maximum entropy distribution for the $k$-th moment $\phi(x) = x^k$ with respect to the same base measure is the Weibull distribution. Taking its log density leads to the following equivalent scoring rule:
\begin{equation} \label{weibull-scoring}
S(\mu, x) = (k-1) \log x - k \log \mu - \Gamma\left(1+\frac{1}{k}\right)^k \left(\frac{x}{\mu}\right)^k,
\end{equation}
where $\Gamma$ denotes the gamma function (the extension of the factorial to the reals). We have not found either scoring rule~(\ref{exp-scoring}) or~(\ref{weibull-scoring}) in the literature.
\end{example}
\begin{example} \label{ex:gaussian}
As a base measure we take the Lebesgue over the real numbers $\bR$. We are interested in eliciting the mean $\mu$ and variance $\sigma^2$, so as a statistic we take $\phi(x) = (x, x^2)$ for which $\Exp_p[\phi(x)] = (\mu, \mu^2+\sigma^2)$. The max entropy distribution for a given mean and variance is the Gaussian, whose log density gives the scoring rule
\begin{equation}
S((\mu, \sigma^2), x) = -\frac{(x-\mu)^2}{\sigma^2} - \log \sigma^2.
\end{equation}
Again, we stress that this scoring rule elicits the mean and variance of any density over the real numbers, not just those of a normal distribution. The construction easily generalizes to a multi-dimensional outcome space by taking the log density of the multivariate normal:
\begin{equation}
S((\mu, \Sigma), x) = -(x - \mu)'\Sigma^{-1}(x - \mu) - \log |\Sigma|.
\end{equation}
Here the statistics being elicited are the mean vector $\mu$ and the covariance matrix $\Sigma$. These scoring rules have been studied by~\citet{dawid1999coherent} as rules that only depend on the mean and variance of the reported density. They note that these rules are weakly proper (because they do not distinguish between densities with the same first and second moments), but do not make the point that knowledge of the full density is not necessary on the part of the agent.
\end{example}

In the above, Example~\ref{ex:gaussian} illustrates an important point about parametrizations of the elicited expectations. The variance $\sigma^2$ cannot be written as $\Exp[\phi(x)]$ for any $\phi$, because the mean $\mu$ enters the definition of $\sigma^2$ but is not available when $\phi$ is defined (indeed it is elicited in tandem with the variance).\footnote{This is an intuitive but far from formal explanation for the fact that the dimension of the message space, or \emph{elicitation complexity}, for eliciting the variance is at least 2~\citep{lambert2008eliciting}.} Instead one must use the first two \emph{uncentered} moments $\Exp[x]$ and $\Exp[x^2]$. These are in bijection with $\mu$ and $\sigma^2$, so the resulting scoring rule can be re-written in terms of the latter. Therefore, it is possible to elicit not just expectations but also bijective transformations of expectations. 

\section{Exponential Family Markets}
\label{sec:maxentmarkets}

In a single-agent setting, a scoring rule is used to \emph{elicit} the agent's belief. In a multi-agent setting, a prediction market can be used to \emph{aggregate} the beliefs of the agents. In his seminal paper~\citet{H03} introduced the idea of a market scoring rule, which inherits the appealing elicitation and aggregation properties of both in order to perform well in thin or thick markets. In this section, we adapt the generalized log scoring rule to a market scoring rule which leads to markets with simple closed-form cost functions for many statistics of interest.%

\subsection{Prediction Market}
\label{sec:pred-market}

In a prediction market an agent's expected belief $\mu$ is elicited indirectly through the purchase and sale of contingent claim securities. Under this approach, each component~$i$ of the statistic $\phi$ is interpreted as the payoff function of a security; that is, a single share of security $i$ pays off $\phi_i(x)$ when $x \in \mX$ occurs. Thus if the portfolio of shares held by the agent is $\delta \in \bR^d$, where entry $\delta_i$ corresponds to the number of shares of security $i$, then the payoff to the agent when $x$ occurs is evaluated by taking the inner product $\la \delta, \phi(x) \ra$. 

As a concrete example, recall that in the classic finite-outcome case the statistic has a component for each outcome $x$ such that $\phi_x(x') = 1$ if $x' = x$ and 0 otherwise. Therefore the corresponding security pays 1 dollar if outcome $x$ occurs. (These are known as Arrow-Debreu securities.) In Example~\ref{ex:exponential} the one-dimensional statistic is $\phi(x) = x$, corresponding to a security whose payoff is linear in the outcome $x \in \bR_+$. (This amounts to a futures contract.) 

The standard way to implement a prediction market in the literature, due to~\citet{chen2007utility}, is via a centralized market maker. The market maker maintains a convex, differentiable cost function $C : \bR^d \rightarrow (-\infty,+\infty]$, where $C(\theta)$ records the revenue collected when the vector of outstanding shares is $\theta$. The cost to an agent of purchasing portfolio $\delta$ under a market state of $\theta$ is $C(\theta + \delta) - C(\theta)$, and therefore the instantaneous prices of the securities are given by the gradient $\grad C(\theta)$. 

A risk-neutral agent will choose to acquire shares up to the point where, for each share, expected payoff equals marginal price. Formally, if the agent acquires portfolio $\delta$, moving the market state vector to $\theta' = \theta + \delta$, then we must have
\begin{equation} \label{risk-neutral-agent}
\Exp_p[\phi(x)] = \grad C(\theta').
\end{equation}
In this way, by its choice of $\delta$, the agent reveals that its expected belief is $\mu = \grad C(\theta')$. We stress that this observation relies on the assumptions that 1) the agent is risk-neutral, 2) the agent does not incorporate the market's information into its own beliefs, and 3) the agent is not budget constrained. We will examine relaxations of each assumption in later sections.

\subsection{Information-Theoretic Interpretation}

In the remainder of this paper we focus on the following cost function, which arises from the ``generalized'' logarithmic market scoring rule (LMSR):
\begin{equation} \label{lmsr-cost}
C(\theta) = \log \int_{x \in \mX} \exp \la \theta, \phi(x) \ra \,d\nu(x).
\end{equation}
This is of course exactly the log-partition function~(\ref{log-part}) for the exponential family with sufficient statistic $\phi$, and we recover the classic LMSR using outcome indicator vectors as statistics. Because an agent would never select a portfolio with infinite cost, the effective domain (i.e., the possible vectors of outstanding shares) of $C$ is $\Theta = \{\theta \in \bR^d : C(\theta) < +\infty \}$, which gives an economic interpretation to the natural parameter space of an exponential family.

The correspondence between the cost function~({\ref{lmsr-cost}) and the log-partition function~(\ref{log-part}) suggests the following interpretation. The market maker maintains an exponential family distribution over the state space $\mX$ parametrized by share vectors that lie in $\Theta$. When an agent buys shares, it moves the distribution's natural parameter so that the market prices matches its beliefs, or in other words the market's mean parametrization matches the agent's expectation. 

There is a well-known duality between scoring rules and cost-function based markets~\citep{acv13,H03}. To see this in our context, recall from~(\ref{equalizer-rule}):
\begin{equation*}
\Exp_{\tp}[\log p(x;\hmu)] = \la \htheta, \mu \ra - T(\htheta)
\end{equation*}
where $\tp$ is the agent's belief and $\hmu$ the agent's report. The expected log score from reporting $\hmu$ is exactly the same as the expected payoff from buying portfolio of shares $\htheta = \grad C(\hmu)$ (assuming an initial market state of 0), as $\la \htheta, \mu \ra$ is the expected revenue and $T(\htheta)$ is the cost. As in Section~\ref{sec:scoring} this reasoning relies on the assumption of risk-neutrality, not on any specific form for the agent's belief.

The agent's expected profit from moving the share vector from $\theta$ to $\theta'$ is
\begin{eqnarray*}
& & \la \theta' - \theta, \mu \ra - C(\theta') + C(\theta) \\
& = & C(\theta) - C(\theta') - \la \theta - \theta', \grad C(\theta) \rangle \\
& = & D_C(\theta, \theta') = D_{C^*}(\mu', \mu),
\end{eqnarray*}
recalling~(\ref{breg-div}). Now~\citet{BanerjeeDhGh05} have observed (among others) that the Kullback-Leibler divergence between two exponential family distributions is the Bregman divergence, with repect to the log-partition function, between their natural parameters. The agent's expected profit is therefore the KL divergence between the market's implied expectation and the exponential family corresponding to the agent's expectation, a well-known property from the classical LMSR~\citep{H07}.

\subsection{Examples: Real Line and the Sphere}

Let us now revisit our scoring rule examples from Section~\ref{sec:scoring} in the context of prediction markets. The relevant entities now are the payoff function, the effective domain of shares, and the cost function.
\begin{example} \label{ex:exp-market}
We consider outcomes over the positive reals $\bR_+$ and set up a market for the expected outcome, consisting of a single security that pays off $\phi(x) = x$. The log partition function of the exponential distribution leads to the following cost function:
$$
C(\theta) = -\log(-\theta).
$$
The effective domain is $\Theta = \{\theta \in \bR : \theta < 0\}$.  This means the market must start with a negative number of outstanding shares for the security, and the number of shares must stay negative. The market maker need not explicitly enforce this, because by the Legendre property of $C$ the cost tends to $+\infty$ as the outstanding shares approach the boundary, which is straightforward to see in this example.
\end{example}
\begin{example} \label{ex:gauss-market}
We consider outcomes over the real line $\bR$ and set up a market with securities corresponding to the first two uncentered moments (i.e, agents are betting on the return and volatility). The securities are defined by the payoffs $\phi(x) = (x, x^2)$. The log partition function of the normal distribution, under its natural parametrization, leads to the following cost function:
$$ 
C(\theta) = -\frac{\theta_1^2}{4\theta_2} - \frac{1}{2}\log(-2\theta_2).
$$
The effective domain is $\Theta = \{(\theta_1,\theta_2) \in \bR^2 : \theta_2 < 0 \}$. Again, we have here an instance where it is not possible for the number of outstanding shares of the second security to exceed 0. However, an arbitrary amount of the securities can be sold short, which corresponds to increasing the variance of the market's estimate.
\end{example}
\begin{example} \label{ex:vonmises-market}
As another example let the outcome space be the $d$-dimensional unit sphere. This setting was considered by~\citet{acv13} who provide a cost function implicitly defined through a variational characterization. The maximum entropy approach leads to another alternative. We have a security for each of the $d$ dimensions, and security $i$ simply pays off $\phi_i(x) = x_i$, where $x \in \bR^d$ is the unit-norm outcome. The maximum entropy distribution over the sphere with such sufficient statistics is the von Mises-Fisher distribution. The log partition function corresponds to
$$
C(\theta) = I_{\frac{d}{2}-1}(\norm{\theta}) - \left(\frac{d}{2} - 1 \right)\log \norm{\theta},
$$
where $I_r$ refers to the modified Bessel function of first kind and order $r$; see~\citet{BanerjeeDhGh05} for an explanation of these quantities. The effective domain of $\theta$ is the positive orthant in $\bR^d$. The mean parametrization of the von Mises-Fisher distribution gives a generalized log scoring rule for the expected outcome components, but it is unwieldy and involves several special functions. 
\end{example}

\section{Bayesian Traders with Linear Utility} \label{sec:bayesians}

In the standard model of cost-function based prediction markets, a sequence of myopic, risk-neutral agents arrive and trade in the market~\citep{chen2007utility,chen2010new}. As we saw in Section~\ref{sec:pred-market}, such a trader moves the prices to its own expectation $\mu$. However, this means that the market does not perform meaningful aggregation of agents' beliefs, as the final prices are simply the final agent's expectation.

In this section we examine the aggregation behavior of the market when agents are Bayesian and take into account the current market state when forming their beliefs. This requires more structure to their beliefs. For this section and the remainder of the paper, we will assume that agents have \emph{exponential family beliefs}.

The exponential families framework is well-suited to reasoning about Bayesian updates. As before let the data distribution be given by $p(x;\theta)=\exp(\la\theta,\phi(x)\ra-T(\theta))$ where $T$ is the log partition function and $\phi$ are the sufficient statistics. Instead of direct beliefs about the data distribution the agent maintains a conjugate prior over the parameters $\theta$. Every exponential family admits a conjugate prior of the form
$$p(\theta;b_{0})=\exp(\la n\nu,\theta\ra + nT(\theta) - \psi(\nu,n)).$$
Note that this is also an exponential family with natural parameter $b_0 = (n\nu,n)$ where $\nu \in \bR^d$ and $n$ is a positive integer. The sufficient statistic maps $\theta$ to $(\theta, T(\theta))$, and the log partition function $\psi$ is defined as the normalizer as usual. For a complete treatment of exponential families conjugate priors, see for instance~\citet{Barndorff78}. Now~\citet[Thm.\ 2]{diaconis79} and~\citet{jewell1974credible} have shown that
\begin{equation}\label{eq:prior}
\E_{\theta\sim b_{0}}\E_{x\sim \theta}[\phi(x)]=\nu,
\end{equation}
meaning that $\nu = n \nu / n$ is the posterior mean. Thus, it is helpful to think of the prior as being based on a `phantom' sample of size $n$ and mean $\nu$. Suppose now that the agent observes an empirical sample with mean $\hmu$ and size $m$. By a standard derivation~\citep[see][]{diaconis79}, the posterior conjugate prior parameters become $n\nu \leftarrow n\nu + m\hmu$ and $n \leftarrow n+m$, and the posterior expectation~(\ref{eq:prior}) evaluates to
\begin{equation} \label{posterior}
\frac{n\nu + m\hmu}{n+m}.
\end{equation}
Thus the posterior mean is a convex combination of the prior and empirical means, and their relative weights depend on the phantom and empirical sample sizes.%

Consider Bayesian agents maintaining an exponential family conjugate prior over the data model's natural parameters (equivalently, the expected security payoffs). Each agent has access to a private sample of the data of size $m$ with mean statistic $\hmu$. If $n$ agents have arrived before to trade, then the current market prices $\mu$ correspond to the phantom sample, and the phantom sample size is $nm$. After forming the posterior~(\ref{posterior}) with these substitutions, the (risk-neutral) agent purchases shares $\delta$ to move the current market share vector to
$$
\grad C(\theta + \delta) = \frac{n\nu + \hmu}{n+1}.
$$
As a result, the final market prices under this behavior are a simple average of the agent's mean parameters and the initial market prices. We note that to facilitate such belief updating, the market should post the number of trades since initialization.

\section{Risk-Averse Traders with Exponential Utility} \label{sec:exputil}

In this section we relax the standard assumption that agents in the market are risk-neutral. We show that with sufficient extra structure to the agents' beliefs and utilities, the market performs a clean aggregation of the agents' beliefs via a simple weighted average.
Assume that the agent has an exponential utility function for wealth $w$:
\begin{equation} \label{exp-util}
U_a(w) = -\frac{1}{a} \exp(-aw).
\end{equation}
Here $a$ controls the risk aversion: the agent's aversion grows as $a$ increases, and as $a$ tends to 0 we approach linear utility (risk-neutrality). Specifically, $a$ is the Arrow-Pratt coefficient of absolute risk aversion, and exponential utilities of the form~(\ref{exp-util}) are the unique utilities that exhibit constant absolute risk aversion~\citep[Chap.\ 11]{VarianMicro}. 

If wealth is distributed according to a probability measure $P$, then the \emph{certainty equivalent} of a random amount of wealth is defined as
$$
\CE(w) = U_a^{-1} (\Exp_P\left[ U_a(w) \right]).
$$
Suppose as before that the agent's belief over outcomes takes the form of a density $p$ with respect to base measure $\nu$. There is a close relationship between the log-partition function and the certainty equivalent under exponential utility~\citep[see][]{ben2007old}.
\begin{lemma} \label{lem:exp-util}
The certainty equivalent of the agent's expected profit, with exponential utility, when acquiring shares $\delta$ under a market state of $\theta$ is
\begin{equation}
\log a - T^p(-a\delta) - aC(\theta+\delta) + aC(\theta),
\end{equation}
where $T^p$ is the log partition function~(\ref{log-part}) with a base measure of $p\,d\nu$. Furthermore, if the agent's belief is an exponential family with natural parameter $\htheta$, we have
\begin{equation*}
T^p(\delta) = T(\htheta + \delta) - T(\htheta),
\end{equation*}
where $T$ is the usual log partition function with base measure $\nu$.
\end{lemma}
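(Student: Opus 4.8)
The plan is to compute the certainty equivalent directly from its definition, $\CE(w) = U_a^{-1}(\Exp_p[U_a(w)])$. First I would identify the relevant random variable: when the agent buys $\delta$ at market state $\theta$, its profit upon outcome $x$ is $w(x) = \la \delta, \phi(x) \ra - C(\theta+\delta) + C(\theta)$, since $C(\theta+\delta)-C(\theta)$ is the deterministic cost of the trade and $\la \delta, \phi(x)\ra$ is the realized payoff. Plugging into $U_a(w) = -\frac1a\exp(-aw)$ gives $U_a(w(x)) = -\frac1a \exp(-a\la \delta, \phi(x)\ra)\,\exp(aC(\theta+\delta)-aC(\theta))$, and the second factor is constant in $x$, so it pulls out of $\Exp_p[\cdot]$.

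The key observation is that the remaining factor is a log-partition value: by the definition of $T^p$ (equation~(\ref{log-part}) with base measure $p\,d\nu$ in place of $\nu$), $\Exp_p[\exp(-a\la\delta,\phi(x)\ra)] = \int_{\mX}\exp\la -a\delta, \phi(x)\ra\, p(x)\, d\nu(x) = \exp(T^p(-a\delta))$. Hence $\Exp_p[U_a(w)] = -\frac1a\exp(T^p(-a\delta) + aC(\theta+\delta) - aC(\theta))$, and applying $U_a^{-1}$ and simplifying the nested logarithm and exponential yields, after collecting constants, the stated expression $\log a - T^p(-a\delta) - aC(\theta+\delta) + aC(\theta)$. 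I would note here that this value is finite precisely when $-a\delta$ lies in the effective domain of $T^p$; outside it the expectation diverges and the certainty equivalent is $-\infty$, which a rational agent would never accept.

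For the second claim I would simply substitute the exponential-family density $p(x) = \exp(\la \htheta, \phi(x)\ra - T(\htheta))$ into the definition of $T^p$. The factor $\exp(-T(\htheta))$ is constant and comes out of the integral, leaving $T^p(\delta) = -T(\htheta) + \log\int_{\mX}\exp\la \htheta + \delta, \phi(x)\ra\, d\nu(x) = -T(\htheta) + T(\htheta+\delta)$, which is the claimed identity; its domain is the shifted set $\Theta - \htheta$.

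The computation is elementary, so there is no deep obstacle; the only points requiring care are (i) bookkeeping the signs and the $-\frac1a$ factors when composing $U_a^{-1}$ with the expectation, so that the constant terms land in the right place, and (ii) being precise about finiteness — both the integral defining $T^p(-a\delta)$ and, in the exponential-family case, the value $T(\htheta+\delta)$ must be finite, and this is exactly the condition under which the certainty equivalent is a real number rather than $-\infty$.
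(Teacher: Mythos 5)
Your proof is correct and follows essentially the same route as the paper's: compute the expected exponential utility of the profit, pull the deterministic cost term out of the expectation, recognize $\Exp_p[\exp(-a\la\delta,\phi(x)\ra)]$ as $\exp(T^p(-a\delta))$, and for the second claim substitute the exponential-family density to get $T^p(\delta)=T(\htheta+\delta)-T(\htheta)$. One bookkeeping note: applying the literal inverse $U_a^{-1}(y)=-\tfrac{1}{a}\log(-ay)$ yields $C(\theta)-C(\theta+\delta)-\tfrac{1}{a}T^p(-a\delta)$, so the displayed formula is $a\cdot\CE+\log a$ rather than $\CE$ itself; the paper's own derivation does the same thing (it effectively applies $-\log(-\,\cdot\,)$ in place of $U_a^{-1}$), and since this is a strictly increasing affine transformation of the certainty equivalent it is harmless for the optimization in Theorem~\ref{thm:exp-util}.
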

\begin{proof}
Explicitly, the certainty equivalent of the profit is
\begin{eqnarray*}
& & \CE(\,\la\delta,\phi(x)\ra - [C(\theta+\delta) - C(\theta)]\,)\\
& = & -\log \int_{\mX} \frac{1}{a} \exp \left(\la-a\delta,\phi(x)\ra + a [C(\theta+\delta) - C(\theta)]\right) p(x)d\nu(x)\\
& = & \log a - a [C(\theta+\delta) - C(\theta)] - \log \int_{\mX} \exp\la-a\delta,\phi(x)\ra\, p(x)\,d\nu(x)\\
& = & \log a - a [C(\theta+\delta) - C(\theta)] - T^p(-a\delta).
\end{eqnarray*}
For the second part of the result, we have
\begin{eqnarray*}
T^p(\delta) & = & \log \int_{\mX} \exp\la\delta,\phi(x)\ra\,p(x;\htheta)\, d\nu(x)\\
& = &  \log \int_{\mX} \exp( \la\delta+\htheta,\phi(x)\ra - T(\htheta) )\, d\nu(x)\\
& = &  T(\htheta+\delta) - T(\htheta) + \log \int_{\mX} \exp( \la\delta+\htheta,\phi(x)\ra - T(\htheta+\delta) )\, d\nu(x)\\
& = &  T(\htheta+\delta) - T(\htheta) + \log \int_{\mX} p(x;\htheta+\delta) \, d\nu(x) =   T(\htheta+\delta) - T(\htheta),
\end{eqnarray*}
where the last line follows from the fact that density $p(x;\htheta+\delta)$ integrates to 1.
\end{proof}

\noindent
Recall that for the generalized LMSR, the cost function $C$ is exactly the log partition function $T$. We are therefore lead to the following understanding of a risk-averse agent's behavior in such a market.
\begin{theorem} \label{thm:exp-util}
Suppose an agent has exponential utility with coefficient $a$ and exponential family beliefs with natural parameter $\htheta$. In the generalized LMSR market with current market state $\theta$, the agent's optimal trade $\delta$ moves the state vector to
\begin{equation} \label{exp-update}
\theta + \delta = \frac{1}{1+a} \htheta + \frac{a}{1+a} \theta.
\end{equation}
\end{theorem}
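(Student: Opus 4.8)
The plan is to reduce the agent's optimization to maximizing the certainty equivalent from Lemma~\ref{lem:exp-util} and then to solve a first-order condition using the Legendre-type structure of $T$. Since the exponential utility $U_a$ is strictly increasing, a rational agent maximizing expected utility equivalently maximizes the certainty equivalent of its expected profit. By Lemma~\ref{lem:exp-util}, and using that for the generalized LMSR we have $C = T$, the agent therefore chooses $\delta$ to maximize
\begin{equation*}
f(\delta) = \log a - T^p(-a\delta) - aT(\theta+\delta) + aT(\theta).
\end{equation*}
Because the agent's belief is an exponential family with natural parameter $\htheta$, the second part of Lemma~\ref{lem:exp-util} gives $T^p(-a\delta) = T(\htheta - a\delta) - T(\htheta)$, so dropping the terms independent of $\delta$ the agent maximizes
\begin{equation*}
g(\delta) = -T(\htheta - a\delta) - aT(\theta+\delta).
\end{equation*}

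Next I would argue that $g$ is concave and attains its maximum at an interior point. Each summand is a convex function ($T$, by Proposition~\ref{prop:exp}) precomposed with an affine map and negated, hence concave; since $a > 0$, $g$ is concave, and strictly so along any direction on which $T$ is strictly curved. Moreover $T$ is of Legendre type, so $\norm{\grad T}$ blows up as its argument approaches the boundary of $\Theta$; this forces the supremum of $g$ to be attained in the interior rather than escaping to the boundary (a bit of care is needed to check that $\htheta - a\delta$ and $\theta + \delta$ can be kept simultaneously in $\mathrm{int}\,\Theta$, which holds since $\htheta, \theta \in \mathrm{int}\,\Theta$ and these sets are open). Thus the optimal $\delta$ is characterized by the first-order condition $\grad g(\delta) = 0$, i.e.
\begin{equation*}
a\,\grad T(\htheta - a\delta) - a\,\grad T(\theta + \delta) = 0.
\end{equation*}

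Finally I would invoke the injectivity of $\grad T$ on $\mathrm{int}\,\Theta$ (Proposition~\ref{prop:exp}, part 2) to conclude from $\grad T(\htheta - a\delta) = \grad T(\theta + \delta)$ that $\htheta - a\delta = \theta + \delta$, hence $(1+a)\delta = \htheta - \theta$ and $\delta = \frac{1}{1+a}(\htheta - \theta)$. Substituting, $\theta + \delta = \frac{1}{1+a}\htheta + \frac{a}{1+a}\theta$, which is~(\ref{exp-update}). The main obstacle is not the algebra but the two regularity points: justifying that the optimum is interior (so the first-order condition applies) and that $\grad T$ is one-to-one there — both of which are exactly what the Legendre-type and minimality hypotheses collected in Proposition~\ref{prop:exp} buy us. An optional remark: as $a \to 0$ the update recovers $\theta + \delta \to \htheta$, consistent with the risk-neutral behavior of Section~\ref{sec:pred-market}, and as $a \to \infty$ the agent does not move the market at all.
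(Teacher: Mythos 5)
Your proposal is correct and follows essentially the same route as the paper's own proof: reduce to maximizing the certainty equivalent via Lemma~\ref{lem:exp-util} with $C = T$, use strict concavity to get the first-order condition $\grad T(\htheta - a\delta) = \grad T(\theta + \delta)$, and conclude by injectivity of $\grad T$. Your extra care about the optimum being interior (via the Legendre-type property) is a reasonable refinement the paper leaves implicit, but the argument is the same.
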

\begin{proof}
The agent's optimal trade maximizes its expected utility, or equivalently the certainty equivalent. From Lemma~\ref{lem:exp-util} and that $T = C$, the agent maximizes
$$
\log a - T(\htheta - a\delta) + T(\htheta) - aT(\theta+\delta) + aT(\theta). 
$$
This objective is strictly concave, from the strict convexity of $T$. The optimum is therefore characterized by the first-order condition $\grad T(\htheta - a\delta) = \grad T(\theta+\delta)$. As the gradient map $\grad T$ is one-to-one, this is solved by equating the arguments, which leads to $\delta = (\htheta - \theta) / (1+a)$ and~(\ref{exp-update}).
\end{proof}

Note that as, $a$ tends to 0, we approach risk neutrality and the agent moves the share vector all the way to its private estimate $\htheta$. As $a$ grows larger (the agent grows more risk averse) the agent makes smaller trades to reduce it exposure, and the final state stays closer to the current state $\theta$. Update~(\ref{exp-update}) implies that, under the conditions of the theorem, a market that receives a sequence of myopic traders aggregates their natural parameters in the form of an exponentially weighted moving average. The final market estimates (i.e., prices) are obtained by applying $\grad T$ to this average.

\subsubsection*{Liquidity Adjustment}

In practice the centralized market maker allows itself some control over the \emph{liquidity} in the market, which captures how responsive prices are to trades. To adjust liquidity we consider the parametrized cost $C_{\lambda}(\theta)=\frac{1}{\lambda} C\left(\lambda\theta\right)$. Here $\lambda$ is construed as the \emph{inverse liquidity}, or price responsiveness. A larger setting of $\lambda$ means fewer shares need to be bought to reach the same prices.\footnote{The liquidity adjustment to the cost function takes the same form as the risk-aversion adjustment to the exponential utility in~(\ref{exp-util}). In convex analysis, this transformation is known as the perspective function~\citep[p.\ 90]{HiriartLe00}.}

In the context of the generalized LMSR we write $T$ rather than $C$ where $T$ is the log partition function, with liquidity-adjusted version $T_{\lambda}$. Let $\hmu$ be the agent's mean belief with corresponding natural parameter $\htheta = \grad T^{-1}(\hmu)$. Recall that a risk-neutral agent moves the share vector so that the prices match its mean parameter. Therefore, define the \emph{target shares} as $\ttheta = \grad T^{-1}_{\lambda}(\hmu)$. The target shares $\htheta$ and natural parameter $\ttheta$ are related by $\grad T(\htheta) = \grad T_{\lambda}(\ttheta) = \hmu$. In addition it is straightforward to check that $\grad T_{\lambda}(\ttheta) = \grad T(\lambda \ttheta)$ so we have
\begin{equation} \label{target}
\ttheta = \htheta / \lambda.
\end{equation}
Higher price responsiveness means fewer shares must be bought to make the market prices match the agent's expectation, so the natural parameter is scaled down accordingly. With a liquidity adjustment the analysis of Theorem~\ref{thm:exp-util} can be extended and yields the following result, where as before $\theta$ and $\mu$ are the market's outstanding shares and prices respectively.
\begin{corollary}
Under the conditions of Theorem~\ref{thm:exp-util} and an inverse liquidity of $\lambda$, the agent's optimal trade $\delta$ moves the state vector to
\begin{equation} \label{liqu-update}
\theta + \delta \:\:=\:\: \frac{\lambda}{\lambda+a} \ttheta + \frac{a}{\lambda+a} \theta
\:\:=\:\: \frac{\lambda}{\lambda+a} \grad T^{-1}_{\lambda}(\hmu) + \frac{a}{\lambda+a} \grad T^{-1}_{\lambda}(\mu).
\end{equation}
\end{corollary}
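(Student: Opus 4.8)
The plan is to replay the proof of Theorem~\ref{thm:exp-util} essentially verbatim, but with the liquidity-adjusted cost $T_{\lambda}$ in place of $T$ everywhere the market maker's cost appears, while leaving the agent's belief log-partition function $T$ untouched (the agent's beliefs do not depend on the market's liquidity setting). Since Lemma~\ref{lem:exp-util} is stated for an arbitrary cost function $C$, I would apply it with $C = T_{\lambda}$: the certainty equivalent of the agent's profit from trade $\delta$ at market state $\theta$ is $\log a - T^{p}(-a\delta) - a T_{\lambda}(\theta+\delta) + a T_{\lambda}(\theta)$, and the second part of the lemma still gives $T^{p}(\delta) = T(\htheta+\delta) - T(\htheta)$ because that part only concerns the agent's exponential-family belief, not the cost function. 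Substituting, the agent maximizes $\log a - T(\htheta - a\delta) + T(\htheta) - a T_{\lambda}(\theta+\delta) + a T_{\lambda}(\theta)$ over $\delta$.

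Next I would observe that this objective is strictly concave (strict convexity of $T$, and of $T_{\lambda}$, which is inherited from $T$ through the perspective transform), so the first-order condition is both necessary and sufficient. Differentiating in $\delta$ and cancelling the common factor $a$ gives $\grad T(\htheta - a\delta) = \grad T_{\lambda}(\theta+\delta)$. The key identity here is $\grad T_{\lambda}(\theta) = \grad T(\lambda\theta)$ — exactly the fact already used to derive~(\ref{target}) — so the condition becomes $\grad T(\htheta - a\delta) = \grad T(\lambda(\theta+\delta))$. Since $\grad T$ is one-to-one by Proposition~\ref{prop:exp}, I can equate arguments: $\htheta - a\delta = \lambda\theta + \lambda\delta$, hence $\delta = (\htheta - \lambda\theta)/(\lambda+a)$.

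Finally I would simplify and rewrite: $\theta + \delta = (a\theta + \htheta)/(\lambda+a) = \frac{\htheta}{\lambda+a} + \frac{a}{\lambda+a}\theta$, and substituting $\htheta = \lambda\ttheta$ from~(\ref{target}) gives the first stated form $\frac{\lambda}{\lambda+a}\ttheta + \frac{a}{\lambda+a}\theta$. The second form is then immediate from the definitions $\ttheta = \grad T^{-1}_{\lambda}(\hmu)$ and $\theta = \grad T^{-1}_{\lambda}(\mu)$, the latter because $\mu = \grad T_{\lambda}(\theta)$ is precisely the market's price vector under the liquidity-adjusted cost. There is no real obstacle here: the only point requiring care is keeping straight which log-partition function is liquidity-adjusted (the market's $T_{\lambda}$) and which is not (the agent's $T$), and correctly combining $\grad T_{\lambda}(\cdot) = \grad T(\lambda\,\cdot)$ with $\ttheta = \htheta/\lambda$; everything else is the Theorem~\ref{thm:exp-util} computation with decorated symbols.
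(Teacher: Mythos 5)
Your proposal is correct and follows exactly the route the paper intends: it replays the proof of Theorem~\ref{thm:exp-util} with the market cost replaced by $T_{\lambda}$ while keeping the agent's belief partition function $T$, uses $\grad T_{\lambda}(\cdot) = \grad T(\lambda\,\cdot)$ in the first-order condition, and equates arguments via injectivity of $\grad T$ to get $\delta = (\htheta - \lambda\theta)/(\lambda+a)$, then rewrites via $\ttheta = \htheta/\lambda$. This matches the paper's (sketched) extension of the theorem's analysis, so nothing further is needed.
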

According to~(\ref{liqu-update}), as $\lambda$ grows large the agent moves the market state closer to the \emph{target shares}, rather than its true natural parameter. Note that the target shares themselves depend on $\lambda$ by~(\ref{target}), but the update can be directly written in terms of the agent's beliefs as in the right-hand side of~(\ref{liqu-update}).

\subsection{Repeated Trading and the Effective Belief}

In previous sections we have analyzed trader behavior assuming it is his first entry into the market. We now pose the question: how will a trader reason about a possible future investment when the trader holds an existing portfolio?
In the context of a trader possessing an exponential family belief together with exponential utility, we show that we can explicitly analyze how an agent incorporates an existing portfolio. The key conclusion is that a trader will reason about a future investment simply as though he had updated his belief and had no prior investment.

Suppose an exponential utility agent has exponential family belief parametrized by natural parameter $\hat{\theta}$. Based on this belief, let $\delta_1$ be the vector of shares the agent has purchased on first entry in the market.
On a subsequent entry into this market with market state $\theta'$, his optimal purchase $\delta_{2}^{*}$ is given by the solution of
$$
\arg\max_{\delta_{2}} \:\: \E_{x\sim p(x;\htheta)} U\left[ \la\delta_1+\delta_{2} ,\phi(x)\ra - C(\delta_1 + \theta) + C(\theta)- C(\delta_{2} + \theta') + C(\theta') \right].
$$
Then if $\theta'' = \hat{\theta}-a \delta_1  $ is the effective belief, the trader's optimal purchase is given by $\delta_{2} = (\theta'' - \theta') / (1+a)$, moving the share vector to $\theta' + \delta_{2} = \frac{1}{1+a} \theta'' + \frac{a}{1+a} \theta'$, which is a convex combination of the effective belief and the current market state.
\begin{theorem}\label{thm:reptrades}
Suppose an exponential utility maximizing trader with utility parameter $a$ who has belief $\hat{\theta}$ makes a purchase $\delta$ in a market. On subsequently re-entering the market, he will behave identically to an exponential utility maximizing trader with belief $\hat{\theta}-a \delta$ and no prior exposure in the market.
\end{theorem}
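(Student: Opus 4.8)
The plan is to reduce the re-entry optimization directly to the first-entry optimization already solved in Theorem~\ref{thm:exp-util}, by showing that the only effect of the prior holding $\delta$ is to shift the natural parameter of the agent's belief from $\htheta$ to $\theta'' = \htheta - a\delta$ inside the relevant certainty-equivalent objective.

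First I would write out the trader's total wealth at outcome $x$ after both trades: the fixed profit $-[C(\theta+\delta)-C(\theta)]$ from the first trade (a constant once $\delta$ has been chosen), plus the random payoff $\la\delta,\phi(x)\ra$ from the first trade's shares, plus the second-trade profit $\la\delta_2,\phi(x)\ra - [C(\theta'+\delta_2)-C(\theta')]$. Then, exactly as in the proof of Lemma~\ref{lem:exp-util}, I would pass to the certainty equivalent (legitimate since $U_a$ is increasing, so maximizing expected utility is the same as maximizing $\CE$). The constant cost term passes through additively, because under exponential utility $\CE(w+c)=\CE(w)+c$ (as $\E[e^{-a(w+c)}]=e^{-ac}\,\E[e^{-aw}]$), so it does not affect the argmax. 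The two payoff terms combine into $\la\delta+\delta_2,\phi(x)\ra$, and evaluating the exponential moment of this quantity against the belief density $p(x;\htheta)$ via the identity $T^p(\cdot)=T(\htheta+\cdot)-T(\htheta)$ of Lemma~\ref{lem:exp-util} produces the term $-\tfrac1a\bigl[T(\htheta - a\delta - a\delta_2) - T(\htheta)\bigr]$.

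The key observation is then purely algebraic: with $\theta'' = \htheta - a\delta$, and using $C = T$ for the generalized LMSR, the trader's certainty-equivalent objective in $\delta_2$ is, up to additive constants,
\[
-\,T(\theta'+\delta_2)\;-\;\tfrac1a\,T(\theta'' - a\delta_2).
\]
This is verbatim the objective maximized in the proof of Theorem~\ref{thm:exp-util}, with $\htheta$ replaced by $\theta''$ and the current market state $\theta$ replaced by $\theta'$. Hence the same reasoning applies: the objective is strictly concave by strict convexity of $T$, the first-order condition is $\grad T(\theta'' - a\delta_2)=\grad T(\theta'+\delta_2)$, and injectivity of $\grad T$ gives $\delta_2 = (\theta''-\theta')/(1+a)$ — precisely the trade a fresh exponential-utility trader with belief $\theta''$ and no prior exposure would make facing market state $\theta'$. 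That is exactly the claimed behavior.

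I expect the only real care needed is the bookkeeping of the wealth expression and tracking which terms are constant in $\delta_2$: the fixed first-trade cost must be separated from the random first-trade payoff, since the former drops out of the argmax trivially while the latter is exactly what gets absorbed into the belief shift via the moment-generating-function computation. Once those are disentangled the conclusion is immediate from Lemma~\ref{lem:exp-util} and Theorem~\ref{thm:exp-util}; there is no genuinely hard step. (The argument implicitly uses, as in the statement's setup, that the trader's belief $\htheta$ is unchanged between the two entries.)
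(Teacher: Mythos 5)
Your argument is correct and is essentially the paper's own proof: the paper likewise expands the expected utility of the combined wealth and absorbs the first-trade payoff term $e^{-a\la\delta,\phi(x)\ra}$ into the exponential-family density, shifting the natural parameter to $\htheta - a\delta$, after which the remaining problem is exactly the single-entry objective of Theorem~\ref{thm:exp-util} with belief $\theta''=\htheta-a\delta$ and market state $\theta'$, yielding $\delta_2=(\theta''-\theta')/(1+a)$. Your routing through the certainty equivalent and the identity $T^p(\cdot)=T(\htheta+\cdot)-T(\htheta)$ of Lemma~\ref{lem:exp-util} is just a cleaner packaging of the same computation.
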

Theorem \ref{thm:reptrades} implies that financial exposure can be equivalently understood as changing the privately held beliefs.

\subsection{Equilibrium Market State for Exponential Utility Agents}

We have shown that every exponential-utility maximizing trader picks the share vector $\delta$ so that the eventual market state can be represented as a convex combination of the current market state and the natural parameter of his (exponential family) belief distribution. In this section we will compute the equilibrium state in an exponential family market with multiple such traders.

We draw a well-known result from game theory regarding the class of \emph{potential games}. We say a function $f(\vec{x})$ is at a \emph{local optimum} if changing any coordinate of $\vec{x}$ does not increase the value of $f$.
\begin{theorem}[\citet{monderer1996potential}]\label{thm:potl}
Let $U_{i}(\vec{\delta})$ be the utility function of the $i^{th}$ trader given strategies $\vec{\delta}= (\delta_{1},\ldots,\delta_{i},\ldots,\delta_{n})$. If there exists a potential function $\Phi(\vec{\delta})$ such that $$U_{i}(\vec{\delta})-U_{i}(\vec{\delta}_{-i},\delta_{i}')=\Phi(\vec{\delta})-\Phi(\vec{\delta}_{-i},\delta_{i}')$$
then $\vec{\delta}$ is a Nash equilibrium if and only if $\Phi(\vec{\delta})$ is at a local optimum.
\end{theorem}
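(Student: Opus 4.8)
The plan is to prove this by directly unwinding the definition of a pure-strategy Nash equilibrium and substituting the potential identity. Recall that $\vec{\delta}$ is a Nash equilibrium precisely when no player can strictly improve by a unilateral deviation; that is, for every player $i$ and every alternative strategy $\delta_i'$ we have $U_i(\vec{\delta}) \geq U_i(\vec{\delta}_{-i},\delta_i')$. Likewise, by the notion of local optimum given just before the statement, $\Phi$ is at a local optimum at $\vec{\delta}$ precisely when for every $i$ and every $\delta_i'$ we have $\Phi(\vec{\delta}) \geq \Phi(\vec{\delta}_{-i},\delta_i')$. So the task reduces to showing these two families of inequalities are equivalent.

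First I would fix an arbitrary player $i$ and an arbitrary deviation $\delta_i'$ and invoke the hypothesis $U_i(\vec{\delta}) - U_i(\vec{\delta}_{-i},\delta_i') = \Phi(\vec{\delta}) - \Phi(\vec{\delta}_{-i},\delta_i')$. This single equality shows that $U_i(\vec{\delta}) \geq U_i(\vec{\delta}_{-i},\delta_i')$ holds if and only if $\Phi(\vec{\delta}) \geq \Phi(\vec{\delta}_{-i},\delta_i')$. Since $i$ and $\delta_i'$ were arbitrary, quantifying over all players and all deviations immediately yields that $\vec{\delta}$ is a Nash equilibrium if and only if $\Phi$ is at a local optimum at $\vec{\delta}$, and both directions of the biconditional fall out of this at once.

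There is essentially no obstacle here: once a potential function satisfying the stated identity is in hand, the argument is a one-line substitution, and it is exactly Monderer and Shapley's observation that the equilibrium set of an exact potential game coincides with the set of coordinatewise maximizers of the potential. The only point requiring a moment's care is bookkeeping with quantifiers --- making sure the ``change any single coordinate'' notion of local optimum is read over the full strategy set of each player, matching the unilateral-deviation quantifier in the Nash condition. The genuine work in applying this theorem lies not in the theorem itself but in exhibiting a suitable potential $\Phi$ for the exponential-utility market, which is carried out separately.
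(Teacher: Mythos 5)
Your proof is correct. The paper itself gives no proof of this statement --- it is imported verbatim from Monderer and Shapley (1996) and immediately applied --- and your direct unwinding of the two quantified families of inequalities via the exact-potential identity, with the (correct) reading of ``local optimum'' as a coordinatewise condition over each player's full strategy set, is precisely the standard one-line argument for this fact.
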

In the exponential family market, the cost function $C$ is identical to the log partition function $T$ defined in (\ref{log-part}).  Let $\vec{\delta}$ be the matrix of share vectors purchased by every trader in the market at equilibrium. Let $\theta$ be the initial market state, $\htheta_{i}$ the natural parameter of trader $i$'s belief distribution and $a_{i}$ his risk aversion parameter.

Define a potential function as  $\Phi(\vec{\delta}) =  T\left(\theta+\sum_{i}\delta_{i}\right)+\sum_{i}\frac{1}{a_{i}}T(\htheta_{i}-a_{i}\delta_{i})$.

Rather than working directly with the utilities of every trader, we will work with the log of their utility values.\footnote{It is important to note that the potential function analysis still applies for any monotonically increasing transformation of the traders' utility functions.} Now the log-utility of trader $i$ is $$U_{i}(\vec{\delta})=- T(\theta+\sum_{j}\delta_{j}) + T(\theta+\sum_{j\neq i}\delta_{j}) - \frac{1}{a_{i}} T(\htheta_{i} - a_{i}\delta_{i}) + \frac{1}{a_{i}} T(\htheta_{i}).$$
We can now apply Theorem \ref{thm:potl}, hence the equilibrium state is obtained by jointly maximizing $\Phi(\vec{\delta})$ for each $\delta_{i}$:
\begin{eqnarray*}
\nabla_{\delta_{i}}\Phi(\vec{\delta})&=&\nabla T\left(\theta+\sum_{j=1}^{n}\delta_{i}\right)-\nabla T(\htheta_{i}-a_{i}\delta_{i}) = 0.
\end{eqnarray*}
This leads to the following expression for the final market state.
$$\theta+\sum_{j=1}^{n}\delta_{j}=
\frac{\theta+\sum_{i=1}^{n}\left(\frac{\htheta_{i}}{a_{i}}\right)}{1+\sum_{i=1}^{n}\frac{1}{a_{i}}}$$
We see that the equilibrium state is a convex combination of the initial market state and all agent beliefs, with the latter weighted according to risk tolerance.

\section{Budget-limited Aggregation} \label{sec:budgets}

In this section, we consider the evolution of the market state when traders are budget-limited.
We assume that the traders trade in multiple instances of the market. As before, the market price is interpreted as a probability density over the outcome space and the share vector as the natural parameter of an exponential family distribution. Consistent with the connections drawn in Section~\ref{sec:scoring} and throughout, we measure the error in prediction using the standard log loss.

We show that traders with faulty information can only impose a limited amount of additional loss to the market's prediction. Further, since informative traders experience an expected increase in budget,  they will eventually be unconstrained and allowed to carry out  unrestricted trades. Taken together, this means that while the market suffers limited damage from ill-informed traders, it is also able to make use of all the information from informative traders in the long run.
\paragraph{Budget-limited trades}

Let $\alpha$ be the budget of a trader in the market. Suppose that with infinite budget, the trader would have moved the market state from $\theta$ to $\htheta$, where $\htheta$ represents his true belief. Now suppose further that $\alpha<C(\htheta)-C(\theta)$; that is, the trader's budget does not allow for purchasing enough shares to move the market state to his belief. In this case, we want to budget-limit the trader's influence on the market state. 

Let the current market state be given by $\theta$ and let the final market state be $\theta'=\lambda\htheta+(1-\lambda)\theta$ where $\lambda=\min\left(1, \frac{\alpha}{C(\htheta) - C(\theta)}\right)$. The cost to the trader to move the market state from $\theta$ to $\theta'$ is at most his budget $\alpha$ and is called his \emph{budget-limited trade}.

\paragraph{Limited Damage}
We will now quantify the error in prediction that the market maker might have to endure as a result of ill-informed entities entering the market. We assume that these entities trade in multiple instances of the market; thus the exposure of the market maker is over several rounds. The log loss function for $\theta$ shares held is defined as $L(\theta,x)=-\log p(x;\theta)=C(\theta)-\la \theta\phi(x)\ra$.

\begin{lemma}\label{lem:damage}
The loss induced on the market by an uninformative trader is bounded by his initial budget.
\end{lemma}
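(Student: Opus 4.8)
The plan is to turn the lemma into a conservation-of-money argument: attribute to the trader, round by round, the extra prediction loss his moves cause, show that this per-round increment equals the money he loses on that round's trade, and then sum, using only that a trader's wealth is never driven negative.

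\emph{Setup.} The traders act in a sequence of markets $t=1,\dots,N$. Let $\theta_t$ be the market state immediately before the trader's trade in round $t$, let $\alpha_t$ be his remaining budget at that moment (so $\alpha_1$ is his initial budget), and let $\theta_t'$ be the state after his budget-limited trade toward his belief $\htheta_t$. By the definition of a budget-limited trade, together with convexity of $C$ when the trade is constrained, the up-front cost $c_t\defeq C(\theta_t')-C(\theta_t)$ satisfies $c_t\le\alpha_t$. When the outcome $x_t$ is revealed the position settles and the trader receives $\la\theta_t'-\theta_t,\phi(x_t)\ra$, so his budget becomes $\alpha_{t+1}=\alpha_t-c_t+\la\theta_t'-\theta_t,\phi(x_t)\ra$; in particular $\alpha_t\ge 0$ for every $t$, since a trader may never stake more than he holds.

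\emph{Telescoping.} The additional log loss the market's round-$t$ prediction suffers on account of this trader is $L(\theta_t',x_t)-L(\theta_t,x_t)$. Substituting $L(\theta,x)=C(\theta)-\la\theta,\phi(x)\ra$ makes everything cancel except $c_t-\la\theta_t'-\theta_t,\phi(x_t)\ra$, which is exactly the trader's realized loss on round $t$, that is, $\alpha_t-\alpha_{t+1}$. Summing over rounds,
$$
\sum_{t=1}^{N}\bigl(L(\theta_t',x_t)-L(\theta_t,x_t)\bigr)=\sum_{t=1}^{N}(\alpha_t-\alpha_{t+1})=\alpha_1-\alpha_{N+1}\le\alpha_1,
$$
using $\alpha_{N+1}\ge 0$. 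Because the trader is uninformative he loses money on average, so typically $\alpha_{N+1}<\alpha_1$ and the bound is slack; but the inequality $\le\alpha_1$ holds for any trading behaviour, which is precisely the assertion (and the single-market case is just $N=1$).

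\emph{Main obstacle.} There is no analytic estimate to make here --- the whole content is the bookkeeping. The two points needing care are (i) the accounting convention: the extra loss in round $t$ is measured against the state $\theta_t$ that prevailed just before this trader acted, so that his contribution is cleanly separated from those of the other traders; and (ii) the fact that his wealth stays non-negative, which is exactly the standard way a budget constraint is modelled (he may only take a position he can cover in every outcome) and is what prevents a single unlucky settlement from breaking the telescoping.
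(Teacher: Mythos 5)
Your proof is correct and follows essentially the same route as the paper's: you identify the per-round change in the trader's budget with the per-round change in log loss $L(\theta,x)-L(\theta',x)$, telescope over rounds, and invoke non-negativity of the budget to bound the total induced loss by the initial budget. The only differences are cosmetic (indexing and sign conventions for the "myopic impact").
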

\begin{proof}
First consider the change in budget of a trader $i$ over multiple rounds of the prediction market. Let his budget at rounds $t$ and $t-1$ be $\alpha_{i}^{t}$ and $\alpha_{i}^{t-1}$ respectively. The change in budget for trader $i$ moving the market state from $\theta$ to $\theta'$ with outcome $x^{t}$ is
\begin{eqnarray*}
\alpha_{i}^{t}-\alpha_{i}^{t-1}&=&C(\theta)-C(\theta')-(\theta-\theta')^{T}\phi(x^{t})\\
&=&L(\theta,x^{t})-L(\theta',x^{t})=\Delta_{i}^{t}
\end{eqnarray*}

Where $\Delta_{i}^{t}$ is called the myopic impact of a trader $i$ in round $t$. Thus, the myopic impact captures incremental gain in prediction due to the trader in a round and is equal to the change in his budget in that round.

Since the market evolves so that the budget of any trader never falls below zero, the total myopic impact in $T$ rounds caused due to trader $i$ is $\Delta_{i}:=\sum_{t=1}^{T}\Delta_{i}^{t}= \sum_{t=1}^{T} (\alpha_{i}^{t} - \alpha_{i}^{t-1}) = \alpha_{i}^{T} - \alpha_{i}^{0}\geq -\alpha_{i}^{0}$.
\end{proof}

An interesting aspect of Lemma~\ref{lem:damage} is that the log loss can be quantified in the same units as the traders' budgets.

\paragraph{Budget of Informative Traders}
We now characterize the expected change in budget for an informative trader. %
\begin{lemma}\label{thm:growth}
Let $\theta$ be the current market state. Suppose that an  informative trader with belief distribution  parametrized by $\htheta$  moves the market state to the budget-limited state $\theta'=\lambda \htheta + (1-\lambda) \theta$. Then, the expectation (over the trader's belief) of the trader's profit  is strictly positive whenever his budget is positive and his belief differs from the previous market position $\theta$. 
\end {lemma}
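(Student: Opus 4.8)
The plan is to write the trader's expected profit as a function of the market state he reaches, and then exploit the Bregman structure already visible in~(\ref{breg-div}). Since the cost function is $C = T$, the log-partition function, and since the trader's belief is the exponential family with natural parameter $\htheta$, whose expected statistic is $\grad T(\htheta)$ by Proposition~\ref{prop:exp}(3), the expected profit (over $x\sim p(\cdot\,;\htheta)$) of moving the state from $\theta$ to $\theta'$ is
\begin{equation*}
\E_{x\sim p(\cdot;\htheta)}\bigl[\la \theta'-\theta,\phi(x)\ra\bigr] - \bigl(T(\theta')-T(\theta)\bigr) = \la \theta'-\theta,\grad T(\htheta)\ra - T(\theta') + T(\theta).
\end{equation*}
Here $\theta' = \lambda\htheta + (1-\lambda)\theta$ with $\lambda=\min\bigl(1,\alpha/(C(\htheta)-C(\theta))\bigr)$, so a positive budget $\alpha$ forces $\lambda\in(0,1]$.

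First I would observe that the right-hand side above is exactly $D_T(\theta,\htheta) - D_T(\theta',\htheta)$: expanding $D_T(u,v) = T(u) - T(v) - \la\grad T(v),u-v\ra$ for $u\in\{\theta,\theta'\}$ and $v=\htheta$, the $T(\htheta)$ and inner-product terms telescope and leave precisely that expression. So the lemma reduces to the inequality $D_T(\theta',\htheta) < D_T(\theta,\htheta)$, i.e.\ that the budget-limited move strictly decreases the Bregman divergence from the trader's belief.

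The key step is then a one-dimensional convexity argument along the segment from $\theta$ to $\htheta$. The function $u\mapsto D_T(u,\htheta)$ is strictly convex (as $T$ is strictly convex by Proposition~\ref{prop:exp}(1)), nonnegative, and vanishes only at $u=\htheta$; restricting it to $\lambda\mapsto\lambda\htheta+(1-\lambda)\theta$ gives a strictly convex $\psi(\lambda)$ on $[0,1]$ with $\psi(0)=D_T(\theta,\htheta)>0$ (using $\htheta\neq\theta$) and $\psi(1)=0$. For $\lambda\in(0,1)$, strict convexity yields $\psi(\lambda) < (1-\lambda)\psi(0)+\lambda\psi(1) = (1-\lambda)D_T(\theta,\htheta) < D_T(\theta,\htheta)$, and for $\lambda=1$ directly $\psi(1)=0<D_T(\theta,\htheta)$; in all cases $\psi(\lambda)<D_T(\theta,\htheta)$, so the expected profit $D_T(\theta,\htheta)-\psi(\lambda)$ is strictly positive. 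The point worth emphasizing---and the only place the argument could slip if one were careless---is that moving only \emph{partway} toward one's belief remains profitable in expectation; this is not a generic fact but a consequence of the convexity of $D_T(\cdot\,,\htheta)$ (equivalently, concavity of the expected-profit map in the market state). The endpoint $\lambda=1$ has to be handled separately, since the strict-convexity ``wedging'' degenerates there, but it follows immediately from $D_T(\theta,\htheta)>0$.
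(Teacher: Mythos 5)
Your proof is correct and follows essentially the same route as the paper: rewrite the expected profit as $D_T(\theta,\htheta)-D_T(\theta',\htheta)$ using $\E_{p(\cdot;\htheta)}[\phi(x)]=\grad T(\htheta)$, then exploit convexity of $D_T(\cdot\,,\htheta)$ along the segment $\theta'=\lambda\htheta+(1-\lambda)\theta$. The only difference is cosmetic: the paper bounds the profit below by $\lambda D_T(\theta,\htheta)$ via the convexity inequality (strict positivity then following from $\lambda>0$ and $\theta\neq\htheta$), while you make the strictness explicit through the one-dimensional function $\psi(\lambda)$.
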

\begin{proof}
Let the cost function $C$ be equal to the log partition function $T$ of the belief distribution. The payoff is given by the sufficient statistics $\phi(x)$. Then, the trader's expected net payoff is given by
\begin{eqnarray*}
\E_{x\sim P_{\htheta}}[C(\theta)-C(\theta')-(\theta-\theta')\phi(x)]
&=& T(\theta)-\theta\nabla T(\htheta)-(T(\theta')-\theta' \nabla T(\htheta))\\
&=& D_{T}(\theta,\htheta)-D_{T}(\theta',\htheta) \geq \lambda D_{T}(\theta,\htheta)\geq0
\end{eqnarray*}
where $D_{T}(\cdot,\cdot)$ is the Bregman divergence based on $T$. The second to last inequality holds since $D_{T}(\theta',\htheta)$ is convex in $\theta'$ and we have:
\begin{eqnarray*}
D_{T}(\theta',\htheta)&=&D_{T}\left(\lambda\htheta+(1-\lambda)\theta,\htheta\right) \leq \lambda D_{T}( \htheta ,\htheta)+ (1-\lambda) D_{T}( \theta ,\htheta) 
= (1-\lambda) D_{T}( \theta ,\htheta)
\end{eqnarray*}¥
A trader who adjusts the market state may expect positive profit $\geq \lambda D_{T}(\theta,\htheta)$.
 \end{proof}

We note one important aspect of Lemma~\ref{thm:growth}: the expectation is taken with respect to each trader's belief at the time of trade, rather than with respect to the true distribution. This is needed because we have made no assumptions about the optimality of the traders' belief updating procedure. If we assume that the traders' belief formation is optimal, then this growth result will extend to the true distribution as well.

Given a continuous density the probability a trader will form exactly the same beliefs as the current market position is $0$, and thus, each trader will have positive expected profit on almost all sequences of observed samples and beliefs. 
This result suggests that, eventually, every informative trader will have the ability to influence the market state in accordance with his beliefs, without being budget limited.

Notice that Lemma~\ref{thm:growth} only required that the market state to which the trader moves be representable as a convex combination of the current market state and his belief. This means that the result holds for exponential utility traders aiming to maximize their utility by Theorem~\ref{thm:exp-util}. In this case, the trader who moves the market state can expect his profit to be positive and at least $\frac{1}{a}D_{T}(\theta,\htheta)$ where $a$ is the exponential utility parameter. When the cost function is adjusted to $C_{\lambda}$ with an inverse liquidity parameter $\lambda$ as in Section \ref{sec:exputil},  the trader receives an expected payoff of at least $\frac{1}{a} D_{T}(\lambda\theta,\htheta)$.

\bibliographystyle{acmsmall}
\bibliography{maxentpredmkts}
\end{document}